\documentclass[11pt]{article}
\usepackage{geometry}
\usepackage{graphicx}
\usepackage{amsmath}
\usepackage{amssymb}
\usepackage{amsfonts}
\usepackage{amsthm}
\usepackage{algorithm2e}
\usepackage{todonotes}
\usepackage{natbib}

\newtheorem{thm}{Theorem}

\newtheorem{prop}[thm]{Proposition}

\newcommand{\R}{\mathbb{R}}

\renewcommand{\and}{\text{ and }}

\newcommand{\E}{\mathbb{E}}

\renewcommand{\P}{\mathbb{P}}
\newcommand\floor[1]{\left\lfloor#1\right\rfloor}

\newcommand\curly[1]{\left\{#1\right\}}
\newcommand\brac[1]{\left[#1\right]}
\newcommand\paren[1]{\left(#1\right)}
\newcommand\abs[1]{\left|#1\right|}

\renewcommand{\hat}{\widehat}
\renewcommand{\tilde}{\widetilde}

\DeclareMathOperator*{\argmax}{arg\,max}

\title{Fair Information Spread on Social Networks with Community Structure}
\author{Octavio Mesner, Elizaveta Levina, Ji Zhu \\ Department of Statistics, University of Michigan}
\date{\today}
\begin{document}

\maketitle

\begin{abstract}
	Information spread through social networks is ubiquitous.
	Influence maximization (IM) algorithms aim to identify  individuals who will generate the greatest spread through the social network if provided with information, and have been largely developed with marketing in mind. In social networks with community structure, which are very common, IM algorithms focused solely on maximizing spread may yield significant disparities in information coverage between communities, which is  problematic in settings such as  public health messaging.
	While some IM algorithms aim to remedy disparity in information coverage using node attributes, none use the empirical community structure within the network itself, which may be beneficial since communities directly affect the spread of information.   Further, the use of empirical network structure allows us to leverage community detection techniques, making it possible to run fair-aware algorithms when there are no relevant node attributes available, or when node attributes do not accurately capture network community structure.
	In contrast to other fair IM algorithms, this work relies on fitting a model to the social network which is then used to determine a seed allocation strategy for optimal fair information spread.   We develop an algorithm to determine optimal seed allocations for expected fair coverage, defined through maximum entropy, provide some theoretical guarantees under appropriate conditions, and demonstrate its empirical accuracy on both simulated and real networks.  Because this algorithm relies on a fitted network model and not on the network directly, it is well-suited for partially observed and noisy social networks.
\end{abstract}

\section{Introduction}

Individuals frequently learn new information or get recommendations from others who have themselves learned it from someone else, creating a cascade of information moving along a social network.
Much of the past work analyzing social networks as a medium for the transmission of information has focused on resource allocation to maximize the total number of individuals within the network who are exposed to a piece of information.

Influence maximization (IM) aims to choose a given number of nodes ({\em seeds}), which, if given  information to distribute ({\em activated}), will pass it to the largest expected number of nodes, or perhaps optimize some other related objective.   This typically  involves probabilistic modeling of information spread.
IM initially appeared as a method for viral marketing, choosing influencers to promote products in order to maximize a company's  revenue~\citep{domingos2001mineNetwork}.
Shortly after, \cite{kempe2003maximizing} developed a mathematical framework for information spread and a method for optimization.
This work established the \emph{independent cascade} model, where the event of information transmission from an activated node to an unactivated node over a network edge is a binary random variable, independent of other transmissions.
This paradigm, which has been widely used, provides a framework for approximating the optimal solution with theoretical lower bound guarantees.
Since then, many variations on the problem and solutions have been proposed;  see survey papers ~\citep{li2018survey, banerjee2020survey, aghaee2021survey}.

Solely focusing on maximizing coverage is reasonable in many settings, for instance in marketing, but in settings where fair access is a concern, for instance in public health messaging, it may lead to significant disparities in access to information depending on one's place within the network, which is often influenced by attributes such as race, gender, LGBTQ status, etc.
Given that many social networks have community structure, 
aiming to reach the greatest number of individuals may push algorithms to allocated resources to larger and/or denser communities in favor of smaller and less well connected ones.
For example, the racial disparity in HIV pre-exposure prophylaxis uptake in the US~\citep{jenness2019prep} may be due, at least in part, to social influence or lack thereof.
LGBTQ disparities in STEM fields and common stereotypes~\citep{freeman2020lgbtq} may also be propagated through social networks.

In general, the goal of fair machine learning is to develop \emph{fair-aware} methods and algorithms whose performance remains consistent regardless of sensitive attributes such as race or gender. 
This is often accomplished by augmenting a prediction loss in an optimization problem with some additional metric of fairness, particularly with respect to known disadvantaged communities~\citep{barocas2017fairness}.
Fair-aware IM algorithms typically augment an IM objective function with a fairness penalty to reduce potential disparity between network communities, frequently in information coverage after information spread but sometimes in seed allocation as well.
For instance, \cite{stoica2019fairness} evaluate fairness in both seed placement and information coverage after information spread.
Using the independent cascade model, they employ the greedy seed selection approach from \cite{kempe2003maximizing} and a high-degree node heuristic, and find that the greedy approach is roughly fair in terms of seed placement but not information coverage after spread, while the heuristic is not fair for either.
The paper argues that, at least in smaller networks, a fair-aware approach 
can have similar results in overall coverage as IM while also achieving greater fairness in seeding and coverage, and uses a fairness constraint requiring seeds be allocated to both communities.  This paper does not consider the case of more than two communities.  
\cite{tsang2019fairness} quantifies fairness of information coverage after spread using a game-theoretic approach by applying the concept of individual rationality to groups: ``no group should be better off by leaving the (IM) game with their proportional allocation of resources and allocating them internally''.
Using this principle, the paper develops an algorithm for choosing seed nodes to fairly distribute information across different populations, particularly with respect to sensitive attributes such as race and/or sex, while also maximizing the total coverage.
This paper uses data on homeless youth messaging for HIV prevention.
In this setting, it is well documented that racial and sexual minorities represent the greatest risk of HIV seroconversion, thus it is a priority for any messaging campaign to reach these communities.
\cite{ali2022fairness} also consider fairness in coverage after information spread but in time-critical settings, such as job announcements, again using an independent cascade model of information spread.
The paper has two objectives:
First, it attempts to achieve information coverage within a set period of time given a seed allocation budget.
Second, it attempts to meet a coverage quota using the minimum number of seeds.
In both cases, it quantifies fairness using the maximum difference in proportion of coverage between groups and minimizes this value.
\cite{farnad2020unifyingFair} consider the IM problem with fairness in terms of four separate constraint metrics: equality (seed allocation proportional to community size), equity (expected coverage is proportional to community size), maximin (maximize proportional coverage in community with the least proportional coverage), and diversity (each community's coverage must be at least what its internal coverage would be for its allocated number of seeds).
The paper defines each fairness constraint metric in the context of a mixed integer optimization problem with the objective of activating the maximum number of nodes.
\cite{becker2022fairness} takes a more probabilistic approach than prior methods, attempting to maximize the objective over a space of probability distributions for choosing seeds.
That is, this work does not consider fixed seed allocations, but distributions for generating allocations.
The authors show that this approach can achieve the approximation upper bound found in~\cite{kempe2003maximizing} within the fairness paradigm.

\subsection{Our Contribution}

Prior work on fair influence maximization typically assumes that social networks and community membership are fully observed and non-random.
Unfortunately, these assumption may not always be realistic.
For example, an individual may learn some piece of information from an observed social network such as Twitter or Facebook then share it with a new acquaintance via another, unobserved medium, such as text message or word of mouth.
Because information can travel over any social network, observed or not, it may be more realistic to consider an observed social network as a sample from a larger, unseen network.
Further, social networks are commonly observed to have \emph{community structure}, typically interpreted as groups of nodes that are more likely to be connected to each other than to others~\citep{girvan2002community}.
Previous work on fair information spread has primarily used observed node attributes to determine community membership and resource allocation.
Unfortunately, node attributes are not always available, and may be protected by privacy considerations; and even when they are available, the degree to which they correspond to the existing community structure within the network may vary.


Fairness is often focused on sensitive attributes, such as  LGBTQ or HIV status, many of which are likely to be reported inaccurately, adding noise.
Moreover, node attributes may not correspond to communities in the network sense of closer ties.
\cite{tsang2019fairness} hints at this discrepancy in its discussion and analysis of the \emph{price of fairness}, which is defined to be the decrease in optimal IM coverage caused by imposing a fairness constraint.
In their analysis on price of fairness~\cite[Thm. 4.1]{tsang2019fairness}, the authors show that node attributes not reflecting the empirical community structure may lead to a greater cost of fairness.
In settings with many node attributes, the closeness of association with empirical communities can vary widely. 
Thus, rather than using node attributes, we focus on the network community structure itself, estimating it from the observed network, which is a well-studied problem.   

Here, we begin by assuming that observed social networks are generated from either a stochastic block model (SBM)~\citep{holland1983sbm} or a degree-corrected stochastic block model (DCSBM)~\citep{karrer2011stochastic}.
Combining the network model with the independent cascade model for information spread, used by most other fair IM methods, we develop an approximation for the expected number of new activations within each community based on seed node placement determined by the parameters of the fitted model.   This expected coverage is used together with a fairness penalty, quantified through entropy of proportions of activated nodes among communities, to create an objective function that balances coverage with fairness.  
We then maximize this objective function over the space of seed node sets of a given size.

An important aspect of this approach is that the optimization only requires network parameters, but not the network itself.
When a network is available, one can use a community detection method to estimate these parameters, but if a network is not directly observable, which is likely the case in many public health settings, one could also use prior information and expert recommendations for network parameters in place of estimates.

\section{Background and notation}

Assume the network consist of $n$ nodes that are expected to form $K$ communities.  The network is represented by its symmetric binary adjacency matrix $A$, defined by $A_{ij} = 1$ if nodes $i$ and $j$ are connected by an edge and $A_{ij} = 0$ otherwise.
Let $c_i \in \{1, \dots, K\}$ be the community label of node $i$, for $i = 1, \dots, n$, let $C_k :=\curly{i=1,\dots, n: c_i=k}$ be the set of nodes belonging to community $k$, and let $n_k = |C_k|$.  We treat $K$ as known, though in practice it can be estimated by one of the many methods available for estimating the number of communities in a network, such as \cite{le2015betheHessian} or \cite{chen2018network}. 

Let the vector $s\in \curly{0,1}^n$ be the seed vector, with  $s_i=1$ if node $i$ is a seed and zero otherwise.
Let  $q_k^{(t)}(s)$ be the proportion of activated nodes in community $k$ after $t$ time steps of information spread seeded with $s$;  unless it creates ambiguity, we may suppress the dependence on $S$ and/or $t$ in what follows. Let  $q^{(t)} = \brac{ q^{(t)}_1, \dots, q_K^{(t)}}$ be the vector of these proportions for all communities,  and let $p_k^{(t)} :=  (\sum_\ell q_\ell^{(t)})^{-1} q_k^{(t)}$, normalizing to sum up to 1.  
Let $m^{(t)}(s)$ be the total proportion of activated nodes in the network at time $t$, defined by 
$$m^{(t)}(s) = \frac{1}{n} \sum_{k=1}^K n_k q_k^{(t)}.$$
The goal of IM is to determine $S$ to maximize $m(s)$ after some amount of time $t$.  While for some applications taking the limit as $t \rightarrow \infty$ makes sense, for settings where fairness is important, such as public health messaging, timeliness is likely important as well.  Thus we will be looking at spread over a finite number of time points, and behavior at small values of $t$ is likely more important in practice.   

\subsection{Entropy as a measure of fairness}\label{fairEntropy}

There are many ways to quantify fairness; some approaches focus on minimizing disparity in information coverage between groups, while others aim to improve coverage for the worst off.
The Gini coefficient \citep{gini1912variabilita}, frequently used to measure wealth inequality in economics, is defined 
\begin{equation*}
	G = \frac{\sum_{i=1}^n \sum_{j=1}^n \abs{x_i-x_j}}{2\sum_{i=1}^n\sum_{j=1}^n x_j} , 
\end{equation*}
where $x_i$ is the wealth of individual $i$.
A Gini coefficient equal to 0 indicates perfect equality, and the value of 1 indicates that a single individual holds all of the wealth.
\cite{ali2022fairness} uses a related fairness metric which aims to minimize the greatest absolute difference in coverage between groups.   
Both of these measures are not differentiable, which can limit the choice of optimization methods.
The maximum absolute difference metric also only depends on the max and min values across groups, and none of the rest.  
The minimax metric~\citep{rawls1971theory} aim to maximize the benefit among the most disadvantaged.
\cite{farnad2020unifyingFair} and \cite{becker2022fairness} use this metric to improve fairness by maximizing coverage for the group having the least coverage.   A potential drawback of this method is that it does not necessarily minimize disparity between groups.

We use entropy as a fairness metric, though it is more commonly used as a measure of uncertainty.  
For a discrete random variable taking $K$ distinct values, one can show that the uniform random variable placing a probability of $\frac{1}{K}$ on each value has the largest possible entropy in the class.
Entropy is thus a natural measure of fairness, because equal coverage achieves maximum entropy and disparities in coverage lead to a  lower entropy.  Entropy is differentiable on the interior of its domain, simplifying optimization, and in contrast to some other metrics discussed above, all communities contribute to its value.

If $X$ is a discrete random variable over $\curly{1,\dots,K}$ with probability function $\P(X=k) = p_k$, the entropy of $X$ is defined as $$H(X) := -\sum_k p_k \log_K p_k , $$ 
where $0\log 0=0$.
Using $\log_K$ normalizes the entropy so that the maximum achievable value is $1$.
For this application, we normalize $q_k^{(t)}$ so that $p_k^{(t)} = \paren{\sum_j q_j^{(t)}}^{-1} q_k^{(t)}$.

%
\subsection{Network Models}
One of the most popular models for networks with communities is the \emph{stochastic block model} (SBM)~\citep{holland1983sbm};  a recent review can be found in \cite{abbe2017community}.
Under the SBM, community labels $c_i$ for each node $i$ are drawn independently from a multinomial distribution, and the probability of an edge between nodes $i$ and $j$ is determined by the community labels $c_i$ and $c_j$, with edges drawn independently.
For $K$ communities, the SBM is defined by its parameters $\pi = \brac{\pi_1,\dots, \pi_K}$, which is a probability vector ($0 < \pi_k < 1$, $\sum_k \pi_k = 1$) and a $K\times K$ symmetric matrix $P$ with entries in $[0,1]$.  Then we say a network is generated from the $\mathrm{SBM}(n,\pi,P)$ if each $c_i, i = 1, \dots, n$ is drawn independently from a Multinomial($\pi$), and then each $A_{ij}$, $1 \le i \le j \le n$ is drawn independently from Bernoulli($P_{c_i,c_j}$).

The SBM assumptions are likely too simplistic for many real networks, yet in many contexts they provide a good balance of interpretability and model complexity~\citep{zhao2012degreeCorrectedConsistency}.  
One particularly popular generalization of the SBM is the degree-corrected SBM~\citep{karrer2011stochastic}, which removes the assumption of equal expected degrees for all nodes in the same community. In addition to the SBM parameters, the DCSBM also employs node-specific degree parameters $\theta = (\theta_1,\dots, \theta_n)$,  and the probability of an edge  between nodes $i$ and $j$ is given by 
\begin{equation}
	\E[A_{ij}] =\theta_i\theta_jP_{c_i,c_j}~.
 \label{dcsbm}
\end{equation}
For identifiability, we require $\frac{1}{\pi_k n}\sum_i \theta_i I(c_i=k) = 1$ for each community $k$.  
The SBM is a special case of DCSBM obtained by making all $\theta_i$'s equal.  We will present derivations for the more general DCSBM and use both models in our simulation studies.

\subsection{Information Transmission Models}

The most common information transmission models in the literature are the independent cascade (IC) model and the linear threshold model  \cite{kempe2003maximizing}. In the IM and fair IM literature, the independent cascade model is far more common, and it lends itself more easily to analysis; we will assume the IC model here as well.  In the IC model, transmission events are all mutually independent of each other, and a node can only transmit information to its neighbors once, immediately after it receives it (gets activated).
For each pair of connected nodes, $i,j$, let $\beta_{ij}\in [0,1]$ be the probability that node $j$ transmits to node $i$.
Let $S^{(0)} := \curly{i: s_i=1}$ be the set of seed nodes.
Let $S^{(t)} \subset \curly{1,\dots, n}$ be the set of nodes activated by time $t>0$ with $S^{(t-1)} \subseteq S^{(t)}$ and $S^{(t)}\backslash S^{(t-1)}$ be the set of nodes activated at time $t$.
We say that node $i$ is activated at time $t$ if at least one neighbor transmits to $i$ at time $t-1$ and $i$ has not been activated prior to time $t$.


A node can only transmit to a node it is connected with.  Given a network adjacency matrix $A$, let $X_{ij}$ be the event that node $j$ transmits to node $i$ (if node $j$ is activated).  We assume this probability is constant over time, and let  

\begin{equation}
P(X_{ij}=1 | A_{ij} = 1) = \beta_{ij}, 
\label{icmodel}
\end{equation} 
while  $P(X_{ij}=1 | A_{ij} = 0) = 0$.  

To estimate $\beta_{ij}'s$ from data, one has to observe multiple information transmission events, which we do not assume available.    If $\beta_{ij} = \beta$ for all pairs of $(i,j)$, the value of $\beta$ has little bearing on the solution of the IM problem (it only affects the scaling of the tuning parameter which balances IM and fairness). If prior knowledge or expert recommendations are available, they can be easily incorporated into the model, as shown below.  We will assume that $\beta$ is estimated from prior knowledge and treat it as known.

\section{The Fair Information Spread Algorithm}\label{approx}

Our goal is to determine a seed allocation given a fixed budget of seeds $M$, in order to optimize total coverage $m^{(t)}(s)$ while also ensuring fairness.  We do this by adding a fairness penalty quantified by entropy, and solve the optimization problem 
\begin{equation}\label{objective}
	\argmax_{s\in \curly{0,1}^n} m^{(t)}(s) +\lambda H\paren{p_k^{(t)}\paren{s}}
\end{equation}
where $\lambda>0$ is a tuning parameter to be chosen by the user and determines the balance between coverage and fairness.   

Optimizing our objective requires an estimate of the total coverage $m^{(t)}$, which in turn relies on modeling information spread.  
While spread of information through social networks is a familiar everyday experience, its modeling can quickly become complex.
Most methods for modeling information use the observed network as a given.   In contrast, here we model both edge formation (through DCSBM) and subsequent information transmission (through the IC model) to account for noisy, changing, and/or incompletely observed networks.  
While writing down the probability of activation for an individual node is straightforward, estimating spread over time for the entire network is not analytically tractable, and we will rely on recursive approximations to estimate spread.   This estimate will be plugged into the objective function \eqref{objective} to obtain the optimal seed allocation. 

\subsection{Exact spread}\label{sec:exactSpread}

Under our assumption that the network is generated from the DCSBM model \eqref{dcsbm}, the transmissions follow the IC model with probabilities \eqref{icmodel}, and the existence of an edge between nodes and transmissions are independent, the probability that node $j$ will activate node $i$ given that $j$ is currently activated is given by 
\begin{equation}
P(X_{ij} = 1) =  P(X_{ij} = 1|A_{ij} = 1) P(A_{ij} = 1) =  \beta_{ij} \theta_i \theta_j P_{c_ic_j} , 
\end{equation}
This formulation includes the special case of a network generated from the SBM, in which case $P(X_{ij} = 1) = \beta_{ij} P_{c_ic_j}$.

Focusing now on how the activations occur over time, let $G_i^{(t)}=1$ if node $i$ is active at time $t$, and 0 otherwise. Each node can be active only once, and so if $G_i^{(t)}=1$, then $G_i^{(r)}=0$ for $r \neq t$, and $\sum_{r=1}^t G_i^{(r)}=1$ if and only if $i$ has been activated by time $t$. 
The probability that $i$ will be activated by time $t$ is given by Proposition~\ref{eq:exactProb}: 
\begin{prop}\label{eq:exactProb}
    Let $G_i^{(t)}=1$ if node $i$ is active at time $t$, and 0 otherwise.
    Then
    \begin{equation*}
	   \P\paren{\sum_{r=1}^t G_i^{(r)}=1}
	   = 1- \prod_{r=0}^{t-1} \prod_{j\neq i} \brac{ 1- \theta_i \theta_j \beta_{ij} P_{c_ic_j} \P\paren{G_j^{(r)}=1}}~.
    \end{equation*}
\end{prop}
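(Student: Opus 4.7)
The plan is to compute the complementary probability that $i$ has not been activated by time $t$, then subtract from $1$. By the definition of $G_i^{(r)}$, the complement of $\{\sum_{r=1}^t G_i^{(r)} = 1\}$ is the event that for every time $r \in \{0,\ldots,t-1\}$ and every $j \neq i$, it is not the case that $j$ is newly active at time $r$ and successfully transmits to $i$.

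First, I would compute the probability of a single elementary activation event. Fix a pair $(j,r)$. The event ``$j$ activates $i$ at time $r+1$'' requires three things: $G_j^{(r)}=1$, the edge $A_{ij}=1$, and the transmission $X_{ij}=1$. By the DCSBM assumption, $A_{ij}$ is Bernoulli$(\theta_i\theta_j P_{c_ic_j})$, independent of all other edges. By the IC model, conditional on $A_{ij}=1$, $X_{ij}$ is Bernoulli$(\beta_{ij})$, independent of everything else. Since $G_j^{(r)}$ is determined by the network and transmission history away from the spoke $(i,j)$, these three factors are independent and multiply, giving $\P(\text{$j$ activates $i$ at time $r+1$}) = \theta_i\theta_j\beta_{ij}P_{c_ic_j}\P(G_j^{(r)}=1)$.

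Second, I would write the joint non-activation as a product over $(j,r)$. Granting independence of the elementary events across $(j,r)$ pairs, the probability that none of them occurs factors as $\prod_{r=0}^{t-1}\prod_{j\neq i}\brac{1-\theta_i\theta_j\beta_{ij}P_{c_ic_j}\P(G_j^{(r)}=1)}$, and subtracting from $1$ yields the claimed identity.

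The main obstacle, and the step I would scrutinize most carefully, is justifying independence across the $(j,r)$ pairs. Across distinct $j$'s at the same $r$, the edges $\{A_{ij}\}_{j\neq i}$ and transmission coins $\{X_{ij}\}_{j\neq i}$ are independent by DCSBM and IC, and the activation indicators $G_j^{(r)}$ depend only on network/transmission randomness not involving the spoke at $i$, so this direction is clean. Across different $r$'s for the same $j$, however, the indicators $G_j^{(r)}$ are actually mutually exclusive rather than independent, so the product-over-$r$ structure in the statement is stronger than what the raw generative process gives. The cleanest way to produce the stated formula is to treat each time step as supplying an independent transmission opportunity, an interpretation that is implicit in the paper's use of the IC/DCSBM model and that renders the elementary events jointly independent with the per-pair success probabilities computed above. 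Under that convention, the complementary probability factors into the claimed product and the result follows.
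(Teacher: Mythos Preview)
Your approach is essentially the paper's: pass to the complement and factor the probability of non-activation over the elementary transmission events indexed by $(j,r)$. The paper packages the same computation through an auxiliary indicator $F_i^{(r)}$ (the event that some node transmits to $i$ at time $r$, regardless of $i$'s own status) together with the Bernoulli identity $\E\brac{(1-x)^{G}}=1-x\,\P(G=1)$, writing
\[
\P\paren{\sum_{r=1}^t G_i^{(r)}=1}=1-\prod_{r=1}^t \P\paren{F_i^{(r)}=0}
=1-\prod_{r=0}^{t-1}\prod_{j\neq i}\E\brac{(1-\theta_i\theta_j\beta_{ij}P_{c_ic_j})^{G_j^{(r)}}},
\]
and then evaluating each expectation; this is your argument in slightly different dress. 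The independence concern you single out --- that the $G_j^{(r)}$ are mutually exclusive across $r$ for fixed $j$ and dependent across $j$ --- is not addressed in the paper's proof either; it simply invokes ``independence between transmissions'' and moves on, so your caution here is at least as rigorous as the original.
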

\begin{proof}
    Let $F_i^{(t)}=1$ if any node transmits to $i$ at time $t$, regardless of activation status of $i$, and 0 otherwise.
	Because a node only becomes activated once, $\sum_{r=0}^{t} G_j^{(r)}=1$ if activated by time $t$.
	Then, using De Morgan's law and the independence between transmissions, the probability of node $i$ becoming activated by time $t$ is
	\begin{align*}
		\P&\paren{\sum_{r=1}^t G_i^{(r)}=1}
		  =\P\paren{\bigcup_{r=1}^t \paren{F_i^{(r)}=1}} 
		  = 1- \prod_{r=1}^t \P\paren{F_i^{(r)}=0} \\
		  &= 1- \prod_{r=0}^{t-1} \prod_{j\neq i} \E\brac{ \paren{1-\theta_i \theta_j \beta_{ij} P_{c_ic_j}}^{G_j^{(r)}}}
		  = 1- \prod_{r=0}^{t-1} \prod_{j\neq i} \brac{ 1- \theta_i \theta_j \beta_{ij} P_{c_ic_j} \P\paren{G_j^{(r)}=1}}~.
	\end{align*}
\end{proof}

As one might expect, the probability of a node being activated by time $t$ depends on activations of all other nodes prior to time $t$.  While this formula provides a simple closed-form recursive expression, getting a general closed-form solution that depends only on the seed nodes is complicated, and not likely to yield a function that one can easily work with in an optimization algorithm (except numerically).  

\subsection{Approximate spread}\label{sec:approxSpread}

Rather than working with the exact expression from Proposition~\ref{eq:exactProb} directly, we approximate it to obtain a function that lends itself more easily to optimization.  Applying a first-order Taylor expansion, we have 
\begin{align*}
	\P\paren{\sum_{r=1}^t G_i^{(r)} =1}
	&\approx \sum_{r=0}^{t-1} \sum_{j\neq i} \beta_{ij} \theta_i \theta_j P_{c_ic_j} \P\paren{G_j^{(r)}=1} \\
& = \sum_{j=1}^n \beta_{ij} \theta_i \theta_j P_{c_ic_j} I(i\neq j) \P\paren{ \sum_{r=0}^{t-1} G_j^{(r)}=1} ~.
\end{align*}
We use the fact that node activations over time are mutually exclusive events to switch the order of summation.
This approximation simplifies the recursive function into a composition of linear transformations.
Let $\Psi$ be an $n\times n$ matrix with $\Psi_{ij} = I(i\neq j) \beta_{ij} \theta_i \theta_j P_{c_ic_j}$ where $\Psi_{i\cdot}$ is the $i$th row of $\Psi$.
The corresponding approximation for the total expected number of activations by time $t$ is
\begin{align*}
	\E&\brac{ \sum_{i=1}^n \sum_{r=0}^{t} G_i^{(r)}}
	\approx \sum_{i=1}^n \sum_{j=1}^n I(i\neq j) \beta_{ij} \theta_i \theta_j P_{c_ic_j} \P\paren{ \sum_{r=0}^{t-1} G_j^{(r)}=1} \\
    &\approx \sum_{i=1}^n \sum_{j=1}^n \sum_{k=1}^n  \brac{ I(i\neq j) \beta_{ij} \theta_i \theta_j P_{c_ic_j}}  \brac{I(j\neq k) \beta_{jk} \theta_i \theta_j P_{c_jc_k} } \P\paren{ \sum_{r=0}^{t-2} G_j^{(r)}=1} \\
    &\approx 1_n^T \Psi^t G^{(0)} = 1_n^T \Psi^t s
\end{align*}
where $G^{(0)}=s$ corresponds to seed nodes.
The last approximation applies the Taylor approximation $t-1$ times recursively to express the number of activations by time $t$ as a function of the seed set.
Similarly, the number of expected activations in community $k$ by time $t$ can be approximated by 
\begin{equation*}
	\E\brac{ \sum_{i: c_i=k} \sum_{r=0}^{t} G_i^{(r)}} \approx Z_{\cdot k}^T \Psi^t s
\end{equation*}
where $Z$ is an $n\times k$ membership matrix with entries indicating community, $Z_{ik} = I(c_i=k)$, and $Z_{\cdot k}$ is the $k$th column of $Z$.

While these approximations greatly simplify calculations, they introduce error that accumulates over time.
The exact expression from Proposition~\ref{eq:exactProb} accounts for the possibility of multiple incoming transmissions and dependence of activation status for individual nodes over time. 
The approximation, in contrast, assumes that nodes are only ever activated by single incoming transmissions and that the activation status at different points in time of a single node are independent.
In effect, the approximation overestimates the probability of node $i$ becoming activated by time $t$ and all subsequent approximations (this will become clear from the proof of Proposition~\ref{lem:error}).
Proposition~\ref{lem:error} shows the error incurred due to this approximation as a function of the number of seed nodes and the rate of convergence of network parameters as the number of nodes increases.  

\begin{prop} \label{lem:error}
    If $\theta_i \theta_j \beta_{ij} P_{c_ic_j} =\mathcal O\paren{n^{-\alpha}}$ for $\alpha \in (0,1)$ and $M$ nodes chosen as seeds at time $0$, then 
    \begin{equation*}
	\abs{ \E\brac{ \sum_{i=1}^n \sum_{r=0}^{t} G_i^{(r)}} - 1_n^T \Psi^t \brac{G_j^{(0)}}_j } =\mathcal O\paren{n^{2(1-\alpha)t-1}} M^2~.
    \end{equation*}
\end{prop}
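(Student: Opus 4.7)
The plan is to decompose the total approximation error into a single-step Taylor residual at each time $r \leq t$ and then propagate those residuals through the matrix recursion $\widetilde a^{(r)} = \Psi \widetilde a^{(r-1)}$. Write $a_i^{(t)}$ for the exact activation probability given by Proposition~\ref{eq:exactProb} and $\widetilde a^{(t)} := \Psi^t s$ for the approximation, so the target quantity is $E^{(t)} := \sum_i(\widetilde a_i^{(t)} - a_i^{(t)}) = 1_n^T\Psi^t s - \E\brac{\sum_i\sum_{r=0}^t G_i^{(r)}}$. Since $1-\prod_k(1-x_k) \leq \sum_k x_k$ for $x_k\in[0,1]$, the Taylor step systematically over-estimates, so $E^{(t)} \geq 0$ and only an upper bound is needed.

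First I would bound the one-step Taylor residual $T_i^{(t)} := \sum_j \Psi_{ij} a_j^{(t-1)} - a_i^{(t)}$ using the sharper Bonferroni inequality $\sum_k x_k - [1 - \prod_k(1-x_k)] \leq \tfrac{1}{2}(\sum_k x_k)^2$, applied to the doubly-indexed family $x_{r,j} := \Psi_{ij}\P(G_j^{(r)}=1)$ appearing in Proposition~\ref{eq:exactProb}. Combining this with the mutual-exclusivity identity $\sum_{r=0}^{t-1}\P(G_j^{(r)}=1) = a_j^{(t-1)}$ and the pointwise domination $a_j^{(t-1)} \leq \widetilde a_j^{(t-1)}$ gives $T_i^{(t)} \leq \tfrac{1}{2}(\widetilde a_i^{(t)})^2$. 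A direct induction using $\Psi_{ij}=\mathcal O(n^{-\alpha})$ and $\|s\|_1 = M$ then produces $\|\widetilde a^{(t)}\|_\infty = \mathcal O(n^{(1-\alpha)t-1})M$ and $\|\widetilde a^{(t)}\|_1 = \mathcal O(n^{(1-\alpha)t})M$, so
\begin{equation*}
\sum_i T_i^{(t)} \leq \tfrac{1}{2}\|\widetilde a^{(t)}\|_\infty \|\widetilde a^{(t)}\|_1 = \mathcal O\paren{n^{2(1-\alpha)t - 1}} M^2.
\end{equation*}

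Next I would set up and solve the cumulative recursion. Exchanging the order of summation and using that each column sum of $\Psi$ is bounded by $n\cdot \mathcal O(n^{-\alpha}) = \mathcal O(n^{1-\alpha})$ yields
\begin{equation*}
E^{(t)} = \sum_j \paren{\sum_i \Psi_{ij}}\paren{\widetilde a_j^{(t-1)} - a_j^{(t-1)}} + \sum_i T_i^{(t)} \leq n^{1-\alpha} E^{(t-1)} + \mathcal O\paren{n^{2(1-\alpha)t - 1}} M^2 .
\end{equation*}
Iterating from $E^{(0)} = 0$ gives $E^{(t)} \leq \sum_{r=1}^t n^{(1-\alpha)(t-r)} \cdot \mathcal O(n^{2(1-\alpha)r - 1}) M^2$; the combined exponent $(1-\alpha)(t+r)-1$ is maximized at $r=t$, producing the claimed $\mathcal O(n^{2(1-\alpha)t - 1}) M^2$ bound.

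The hard part is the sharp residual bound $T_i^{(t)} \leq \tfrac{1}{2}(\widetilde a_i^{(t)})^2$: it is essential to apply inclusion-exclusion only once, to the joint index $(r,j)$ in Proposition~\ref{eq:exactProb}, rather than iteratively across time steps (which would lose the right quadratic control), and to majorize by the \emph{approximate} $\widetilde a_i^{(t)}$ rather than the unknown $a_i^{(t)}$. Only with this choice does the recursion for $E^{(t)}$ close at the advertised exponent $2(1-\alpha)t - 1$; bounding $T_i^{(t)}$ by $(a_i^{(t)})^2$ instead would leave the right-hand side implicit, and bounding the Taylor error per step rather than jointly in $(r,j)$ would accumulate an extra factor of $t^2$ (harmless for fixed $t$ but obscuring the mechanism).
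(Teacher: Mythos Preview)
Your argument is correct and rests on the same three ingredients the paper uses: the second-order (Bonferroni/Taylor) bound on the one-step linearisation residual, the entrywise estimate $\Psi_{ij}^{t}=\mathcal O(n^{(1-\alpha)t-1})$, and the one-sided inequality $a^{(t)}\le \widetilde a^{(t)}$ that lets you replace exact activation probabilities by their linear surrogate.

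Where you genuinely differ from the paper is in the \emph{propagation} step. The paper bounds the total error directly by $\sum_i\bigl|a_i^{(t)}-(\Psi a^{(t-1)})_i\bigr|$ via a single ``triangle inequality'' and then plugs in the residual bound; this identifies $1_n^T\Psi^t s$ with $\sum_i(\Psi a^{(t-1)})_i$, which is only literally true at $t=1$. Your treatment is cleaner: you split $\widetilde a_i^{(t)}-a_i^{(t)}$ into the Taylor residual $T_i^{(t)}$ plus $(\Psi(\widetilde a^{(t-1)}-a^{(t-1)}))_i$, bound the column sums of $\Psi$ by $\mathcal O(n^{1-\alpha})$, and obtain the explicit recursion $E^{(t)}\le n^{1-\alpha}E^{(t-1)}+\mathcal O(n^{2(1-\alpha)t-1})M^2$, which you then unroll. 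This makes transparent why the accumulated propagation cost is dominated by the last-step residual (the exponent $(1-\alpha)(t+r)-1$ is maximised at $r=t$), so the final rate is unchanged. The paper's shortcut lands at the same bound but your recursion actually justifies it; your remark that inclusion--exclusion must be applied once to the joint index $(r,j)$, and that majorising by $\widetilde a_i^{(t)}$ rather than $a_i^{(t)}$ is what closes the recursion, is exactly the right diagnosis.
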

\begin{proof}
    Let $f:[0,1]^d\rightarrow \R$ be defined as $f(x)=1-\prod_{k=1}^d (1-x_k)$.
    Using $\nabla f(x)=\brac{\prod_{k\neq j} (1-x_k)}_j$, the first-order Taylor polynomial of $f$ centered at $0_d$ is $T_1(x) = \sum_{k=1}^d x_k$.
    Because $\nabla^2 f(x) = \brac{ -I(i\neq j) \prod_{k\neq i,j} (1-x_k)}_{ij}$, then for some $c\in[0,1]$, the remainder,
    \begin{equation*}
        R_1(x) = \frac{1}{2}\sum_{i=1}^d \sum_{j=1}^d \brac{-I(i\neq j) \prod_{k\neq i,j} (1-cx_k)} x_i x_j \geq -\frac{1}{2}\sum_{i=1}^d \sum_{j=1}^d I(i\neq j) x_i x_j~.
    \end{equation*}
    By Taylor's theorem, $\abs{ f(x) - T_1(x) } \leq \frac{1}{2}\sum_{i=1}^d \sum_{j=1}^d x_i x_j$.  Then for a node $i$ we have
    \begin{align*}
        &\abs{ \E\brac{ \sum_{r=1}^t G_i^{(r)}} - \sum_{j=1}^n \sum_{r=0}^{t-1} \beta_{ij} \theta_i\theta_j P_{c_ic_j} I(i\neq j) \E\brac{G_j^{(r)}} } \\
        &= \abs{1- \prod_{r=0}^{t-1} \prod_{j\neq i} \brac{ 1- \beta_{ij} \theta_i \theta_j P_{c_ic_j} \E\brac{G_j^{(r)}}} - \sum_{j=1}^n \sum_{r=0}^{t-1} \beta_{ij} \theta_i\theta_j P_{c_ic_j} I(i\neq j) \E\brac{G_j^{(r)}} } \\
        &\leq \frac{1}{2} \sum_{j\neq i} \sum_{k\neq i} \brac{\theta_i \theta_j \beta_{ij} P_{c_ic_j}  \paren{\sum_{r=0}^{t-1} \E\brac{G_j^{(r)}} }} \brac{ \theta_i \theta_k \beta_{ik} P_{c_ic_k}  \paren{\sum_{r=0}^{t-1} \E\brac{G_k^{(r)}}}} \\
        &\leq \mathcal O\paren{ n^{-2\alpha}} \paren{\sum_{j\neq i} \sum_{r=0}^{t-1} \E\brac{G_j^{(r)}}}^2~.
    \end{align*}
	The first inequality above indicates that the approximation error will deviate at most by the sum of the product of probabilities of two incoming transmissions from previously activated nodes.  The second inequality follows from the assumption that $\theta_i \theta_j \beta_{ij} P_{c_ic_j} =\mathcal O\paren{n^{-\alpha}}$, giving a simplified bound on the error for the probability of $i$ being activated by time $t$ as a function of $n$.

	The error bound depends on the number of activated nodes, which is what we are attempting to forecast.
	Using the forecast model itself to approximate $G_j^{(r)}$, we must determine the magnitude of $\Psi_{ij}^{t-1}$.
	Because $\Psi_{ij} = I(i\neq j) \beta_{ij} \theta_i \theta_j P_{c_ic_j} =\mathcal O\paren{ n^{-\alpha}}$, we have 
	\begin{equation*}
		\Psi_{ij}^2 = \sum_{k=1}^n \Psi_{ik} \Psi_{kj} = \mathcal O\paren{ n^{1-2\alpha} }~.
	\end{equation*}
	By induction, we have $\Psi_{ij}^{t} =\mathcal O\paren{n^{(1-\alpha)t-1}}$.
    Then, because $f(x)-T_1(x) = R_1(x)\leq 0$ for $x\in [0,1]^d$, 
    \begin{equation*}
        \sum_{j\neq i} \sum_{r=0}^{t-1} \E\brac{G_j^{(r)}} \leq 1_n^T \Psi^{t-1} \brac{ G_j^{(0)}}_j
        \leq \mathcal O\paren{ n^{(1-\alpha)(t-1) }} M~.
    \end{equation*}
	
    Using the triangle inequality and the error bounds above,
    \begin{align*}
	&\abs{ \E\brac{ \sum_{i=1}^n \sum_{r=0}^{t} G_i^{(r)}} - 1_n^T \Psi^t \brac{G_j^{(0)}}_j } \\
    &\leq \sum_{i=1}^n \abs{ \E\brac{ \sum_{r=1}^t G_i^{(r)}} - \sum_{j=1}^n \sum_{r=0}^{t-1} \beta_{ij} \theta_i\theta_j P_{c_ic_j} I(i\neq j) \E\brac{G_j^{(r)}} } \\
    &\leq \sum_{i=1}^n \mathcal O\paren{n^{-2\alpha}} \brac{ \mathcal O\paren{ n^{(1-\alpha)(t-1) }} M }^2 \\
    &=\mathcal O\paren{n^{2(1-\alpha)t-1}} M^2~.
    \end{align*}
\end{proof}

\subsection{Approximating the objective function}

Next, we use the approximation to the expected spread to approximate the objective function~\eqref{objective}, to obtain a tractable optimization problem.  Specifically, we approximate the objective as a function of the seed node set given the DCSBM parameters, which will be estimated from data.  

Let $s\in \{0,1\}^n$ be the vector indicating initial activation status for each node; that is $s_i=G_i^{(0)}$ for $i=1,\dots, n$.
We can then estimate the proportion of activations within the entire network by time $t$ using the approximation of $\frac{1}{n}\E\brac{ \sum_{i=1}^n \sum_{r=0}^t G_i^{(t)} }$ from Section~\ref{sec:approxSpread} as
\begin{equation*}
	\tilde m^{(t)}(s) := \frac{1_n^T \Psi^t s}{n}~.
\end{equation*}
Similarly, we estimate the proportion of activations in community $k$ by time $t$ as
\begin{equation*}
	\tilde q_k^{(t)}(s) := \frac{ Z_{\cdot k}^T \Psi^t s}{\pi_k n}~, 
\end{equation*}
and rescale it to sum to 1 to obtain $\tilde p_k^{(t)}(s)$.  We then approximate the objective function   
 $f^{(t)}(s)$ by 
$$\tilde f^{(t)}s) := H(\tilde p^{(t)}s)) +\lambda \tilde m^{(t)}(s) \ . $$ 
Proposition~\ref{lem:objBounds} confirms that our approximation to the objective function is close to the truth.

\begin{prop} 
    If $\beta_{ij}\theta_i \theta_j P_{c_ic_j} =\mathcal O\paren{n^{-\alpha}}$ for $\alpha\in(0,1)$ and the process is started with $M$ seeds, then 
    \begin{equation*}
	\abs{\tilde f^{(t)}(s) -f^{(t)}(s) } =\mathcal O\paren{  n^{2(1-\alpha)t-2} } M^2~.
    \end{equation*}
    \label{lem:objBounds}
\end{prop}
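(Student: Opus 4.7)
The plan is to split $|\tilde f^{(t)}(s) - f^{(t)}(s)|$ by the triangle inequality into a coverage term $\lambda|\tilde m^{(t)}(s) - m^{(t)}(s)|$ and an entropy term $|H(\tilde p^{(t)}(s)) - H(p^{(t)}(s))|$, and to bound each at the claimed rate $\mathcal{O}(n^{2(1-\alpha)t-2}) M^2$. The coverage contribution is essentially immediate: dividing the bound in Proposition~\ref{lem:error} by $n$ yields $|\tilde m^{(t)}(s) - m^{(t)}(s)| = \mathcal{O}(n^{2(1-\alpha)t-2}) M^2$.

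For the entropy term, I would first prove a per-community analogue of Proposition~\ref{lem:error}. Its proof obtains the global bound by summing a per-node error of order $\mathcal{O}(n^{2(1-\alpha)t-2}) M^2$ over all $n$ nodes. Restricting the outer sum to the $n_k$ nodes in community $k$ gives
\begin{equation*}
\left| \E\left[ \sum_{i\in C_k}\sum_{r=0}^{t} G_i^{(r)} \right] - Z_{\cdot k}^T \Psi^t s \right| = \mathcal{O}(n^{2(1-\alpha)t-1}) M^2,
\end{equation*}
so dividing by $\pi_k n$ yields $|\tilde q_k^{(t)}(s) - q_k^{(t)}(s)| = \mathcal{O}(n^{2(1-\alpha)t-2}) M^2$ for each community $k$.

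Next I would propagate this bound from the unnormalized $q_k$ to the normalized proportions $p_k = q_k / \sum_\ell q_\ell$ via the identity
\begin{equation*}
\tilde p_k - p_k = \frac{\tilde q_k - q_k}{\sum_\ell \tilde q_\ell} - p_k \cdot \frac{\sum_\ell (\tilde q_\ell - q_\ell)}{\sum_\ell \tilde q_\ell}.
\end{equation*}
Provided $\sum_\ell q_\ell^{(t)}$ and its estimator stay bounded away from $0$, this gives $|\tilde p_k^{(t)} - p_k^{(t)}| = \mathcal{O}(n^{2(1-\alpha)t-2}) M^2$. Since $p \mapsto -p \log_K p$ has bounded derivative on any interval $[\delta, 1]$ with $\delta > 0$, entropy is Lipschitz on the part of the simplex where every coordinate is at least $\delta$, so $|H(\tilde p^{(t)}) - H(p^{(t)})| \le C \sum_k |\tilde p_k^{(t)} - p_k^{(t)}|$ is of the same order; combining this with the coverage bound closes the proof.

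The principal difficulty lies precisely in the normalization and Lipschitz step: a small denominator $\sum_\ell q_\ell^{(t)}$ could amplify the error, and the partial derivatives of entropy diverge near the boundary of the simplex. A clean argument therefore needs a nondegeneracy condition---either an explicit assumption that $\sum_\ell q_\ell^{(t)}$ and each $p_k^{(t)}$ are bounded below by a positive constant, or an argument that under $\beta_{ij}\theta_i\theta_j P_{c_ic_j} = \mathcal{O}(n^{-\alpha})$ with a non-trivial allocation of the $M$ seeds across communities, such degeneracy cannot occur.
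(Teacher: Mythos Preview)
Your overall plan mirrors the paper's proof: split via the triangle inequality, divide the bound of Proposition~\ref{lem:error} by $n$ for the coverage term, obtain the per-community analogue for $q_k$, push through the normalization to $p_k$, and then control the entropy difference. The paper addresses your nondegeneracy concern about the denominator simply by noting that the total number of activations is always at least $M$, so $\sum_\ell q_\ell$ is bounded below. Where you diverge is the entropy step: you invoke Lipschitz continuity of $p\mapsto -p\log_K p$ on $[\delta,1]$, which requires each $p_k$ to stay bounded away from $0$; the paper instead appeals to the elementary inequality $|x\log x - y\log y| \le 4\sqrt{|x-y|} + |x-y|$ for $x>y>0$, which holds without any lower bound on the coordinates. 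Your Lipschitz route delivers the stated rate cleanly once that extra assumption is granted; the paper's inequality avoids the assumption, though taken at face value the $\sqrt{|x-y|}$ term would produce $\mathcal O(n^{(1-\alpha)t-1})M$ rather than the claimed $\mathcal O(n^{2(1-\alpha)t-2})M^2$.
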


\begin{proof} 
    Because the number of communities $K$ does not grow with $n$, we may apply the same error bound from Proposition~\ref{lem:error} to $\tilde q_k^{(t)}(s)$ and $\tilde m^{(t)}(s)$:
    \begin{align*}
	   \abs{ \tilde q_k^{(t)}(s) -q_k^{(t)}(s) } &  = \frac{1}{n} \mathcal O\paren{ n^{2(1-\alpha)t-1}} M^2 = \mathcal O\paren{ n^{2(1-\alpha)t-2} }M^2 \\
	   \abs{ \tilde m^{(t)}(s) -m^{(t)}(s) }&  = \frac{1}{n} \mathcal O\paren{ n^{2(1-\alpha)t-1}} M^2 = \mathcal O\paren{ n^{2(1-\alpha)t-2} }M^2~.
    \end{align*}
Since the total number of activations is at least $M$, when we rescale $\tilde q^{(t)}$ to sum to 1 to obtain $\tilde p^{(t)}$, the denominator is bounded away from zero.  Thus we also have 
	\begin{equation*}
		\abs{ \tilde p_k^{(t)}(s) -p_k^{(t)}(s) } =\mathcal O\paren{ n^{2(1-\alpha)t-2} } M^2~.
	\end{equation*}
	Using the inequality 
	\begin{equation*}
		\abs{ x \log x -y \log y } \leq 4\sqrt{\abs{x-y}} +\abs{x-y}
	\end{equation*}
	when $x>y>0$, we have 
	\begin{equation*}
		\abs{ H\paren{ \tilde p^{(t)} } -H\paren{ p^{(t)} } } =\mathcal O\paren{ n^{2(1-\alpha)t-2} } M^2~.
	\end{equation*}
	Putting this together, 
	\begin{equation*}
		\abs{\tilde f^{(t)}(s) -f^{(t)}(s) } \leq \abs{ H\paren{ \tilde p^{(t)} } -H\paren{ p^{(t)} }} +\lambda\abs{\tilde m^{(t)}(s) -m^{(t)}(s)} = \mathcal O\paren{ n^{2(1-\alpha)t-2} }M^2~.
	\end{equation*}
\end{proof}

\subsection{The optimization algorithm}

Optimizing the fair IM objective directly over all possible seed sets is a discrete optimization problem.  While one could in principle apply a brute force discrete optimization method such as tabu search~\citep{glover1998tabu}, a continuous relaxation is a standard way to obtain a more computationally efficient solution.
We thus relax the feasible space from  $\{s \in \curly{0,1}^n, \sum_i s_i = M \}$ to $\{s \in \brac{0,1}^n, \sum_i s_i = M \}$, allowing $s_i$ to take values in the $[0,1]$ interval. 

Both the DCSBM and especially the SBM assumptions result in stochastically identical nodes, which are indistinguishable in terms of seed allocation.   Under the DCSBM, nodes $i$ and $j$ will be stochasticaly identical if $\theta_i = \theta_j$ and $c_i = c_j$, which reduces to just $c_i = c_j$ for the SBM.   The optimal allocation can only be determined up to the equivalence class of these stochastically identical nodes.  

To remove this ambiguity from the optimization problem, we construct a unique-node, $n\times v$ membership matrix $V$, where $v$ is the number of unique nodes, defined by $V_{ij} = 1$ if node $i$ is stochastically identical to the $j$th unique node, and  $V_{ij} = 0$ otherwise.  In place of $s\in [0,1]^n$, we now optimize over $x\in [0,1]^v$, with fixed unique node multiplicities $w = 1_n^TV$ and the constraints that $w^Tx\leq M$.
 We approximate the number of activations in community $k$ as $Z_{\cdot k}^T \Psi^t V x$, where $x_i=1$ now indicates that the class of nodes identical to the $i$th unique node is chosen to receive seeds. 

Thus, the optimization problem becomes
\begin{align*}
	\argmax_{x\in [0,1]^v} \quad & \tilde f^{(t)}(x) =  \tilde m^{(t)}(x) +\lambda H\paren{ \tilde p^{(t)}(x) } \\
	\text{given} \quad  & w^Tx \leq M.
\end{align*}
 We solve the problem using trust region constrained optimization~\cite{conn2000trust} implemented in Python3 by scipy.   In brief, this method iteratively approximates the second-order Taylor polynomial of the objective function, using the true gradient with a pseudo-Hessian, and optimizes this function within a given radius around the current input.
 
Denote the solution to the optimization problem as $x^* \in [0,1]^v$.
The number of seeds that should be allocated to each class is then given by $y_i = \lfloor w_i x^*_i\rfloor$.  (If it happens that $\sum_i y_i \leq M-1$, we can assign an additional seed to class $j=\argmax_i \curly{w_ix_i^*-y_i}$, and repeat until $\sum_i y_i = M $).

\section{Results on Synthetic Networks}

To empirically evaluate the proposed algorithm, we conducted simulations of information spread over both real and simulated networks.  The focus of comparisons on simulated networks is comparing allocation strategies, and on error accumulation over time.   Since parameter estimation for DCSBM is well understood, we do not focus on evaluating that part of the algorithm in simulations, plugging in the true parameter values into the optimization objective.  

The optimization algorithm outputs the number of seeds to allocate to each unique class of nodes, determined by a unique combination of degree parameter $\theta_i$ and community label $c_i$.  For the SBM, the algorithm simply outputs the number of seeds to allocate to each community.    Whenever the algorithm allocates seeds to a class with more than one stochastically identical node, we assign them randomly within that class. 

Once seeds are allocated, we simulate information spread within the network using the independent cascade model, for a fixed number of transmission steps $t$.    Then we count the number of nodes activated in each community after $t$ steps and calculate proportion of activations in each community. The entire process is repeated 50 times for each set of network parameters, and a new network is generated with the same parameters for every replication. 


We simulate the networks from the SBM rather than the DCSBM here because the alternative allocation strategies are particularly intuitive and easy to illustrate in this case.   We compare four allocation strategies: the  proposed fair IM algorithm (``Proposed"), allocating an equal number of seeds to each community (``Equal"), allocating seeds proportionally to community size (``Proportional"), and allocating all seeds to the largest community (``Largest").

All SBM networks are generated with $n=1000$ nodes, $K=3$ communities, probabilities of community assignment  $\pi^T = \brac{ 0.7, 0.2, 0.1 }$, and the probability matrix 
$$P = \frac{1}{100} \begin{bmatrix} a_1 & 1 & 1 \\ 1 & a_2 & 1 \\ 1 & 1 & a_3 \end{bmatrix} . $$ 
The weight vector $a = \brac{a_1,a_2,a_3}$ varies across our three settings.  
SBM-1 corresponds to  $a=\brac{10, 5, 2.5}$, with larger communities being denser and average expected degree of 56. 
SBM-2 has $a=\brac{2.5, 5, 10}$, with smaller communities being denser and average expected degree of 20.  
Finally, SBM-3 has $a = \brac{5, 5, 5}$, with all communities equally dense and average expected degree of 32.  While the average degrees cannot be made exactly the same if we want to vary how the density relates to community size, they are all comparable from the point of view of SBM network behavior.  

Figure~\ref{nonopt} shows detailed results (both overall and for each community) for the four allocation strategies under SBM-1, and Figure~\ref{nonopt2} compares overall results for the three SBMs.  Both correspond to $t=1$ (single transmission step), $\lambda = 3$, transmission probability $\beta_{ij}=0.2$ for all $i,j$, and a budget of $M = 30$ seeds.  

\begin{figure}[!ht]
	\centering
	\includegraphics[width=0.6\textwidth]{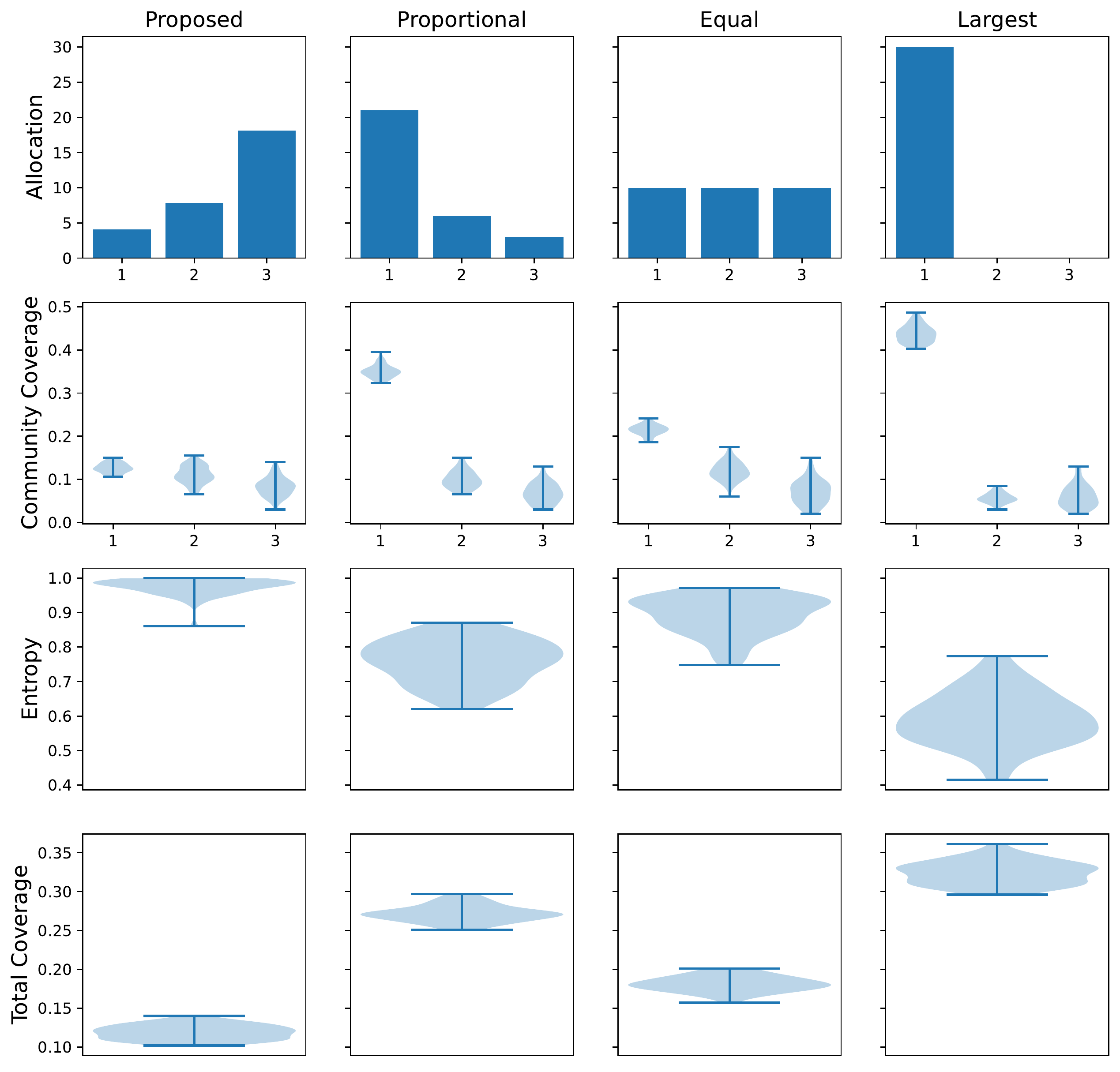}
	\caption{Simulation results on SBM-1 comparing four allocation strategies.  Row 1: number of seeds allocated to each community under different strategies.   Row 2:  Violin plots of coverage for each community, over 50 replications.  Row 3:  violin plots of entropy over communities (fair allocation corresponds to 1).  Row 4: violin plots of overall coverage.  
	}\label{nonopt}
\end{figure}

Figure~\ref{nonopt} shows that under SBM-1 the four different allocation strategies propose quite different allocations and lead to different results.  The proposed method allocates most seeds to the smallest community and the least seeds to the biggest, leading to highest fairness but lowest total coverage in the end.    
In contrast, allocating all seeds to the largest, densest community yields the greatest but least fair coverage.
These results show that, as is often the case, there is a cost to fairness in terms of coverage, which is not surprising:  changing the objective function to include a fairness penalty is likely to lead to a different solution from the one that just maximizes coverage.  

\begin{figure}[!ht]
	\centering
	\includegraphics[width=0.6\textwidth]{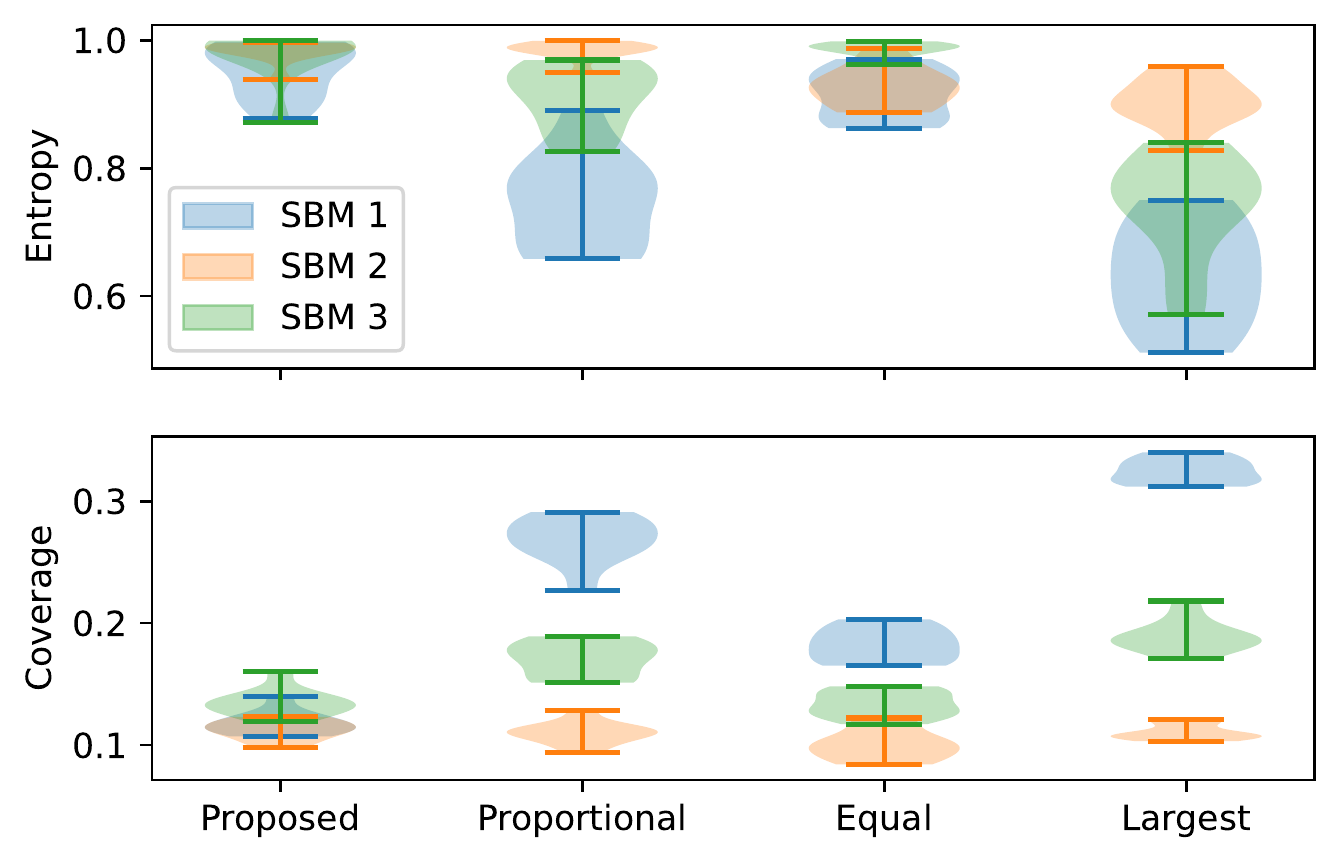}
	\caption{
 Simulation results on SBM-1, SBM-2, and SBM-3 comparing four allocation strategies.  Row 1: violin plots of entropy over communities (fair allocation corresponds to 1).  Row 2: violin plots of overall coverage. 
	}\label{nonopt2}
\end{figure}

Figure~\ref{nonopt2} shows both entropy and coverage for all three SBMs, corresponding to rows 3 and 4 of Figure~\ref{nonopt}, and Table~\ref{tab:fig2alloc} gives the number of seeds allocated to each community by our algorithm.   It is particularly informative to compare SBM-1 and SBM-2.
In SBM-1 the largest community was the densest and smallest community the sparsest, but in SBM-2, the largest community is the sparsest and the smallest is the densest.
In SBM-2, our proposed allocation is very close to proportional.  
Here, allocation strategy seems to have a lesser effect on both entropy and coverage; though allocating all seeds to the largest community does have a slightly lower entropy, all strategies have similar total coverage.   
For SBM-3, our proposed allocation is very close to equal and they have greater entropy with slightly lower total coverage then the other two strategies.   This trade-off is controlled by the value of $\lambda$, which cannot be learned from the data:  it reflects how much weight the practitioner is willing to put on fairness.    

\begin{table}
\begin{center}
\begin{tabular}{cccc}
    Community (\# nodes) & 1 (700) & 2 (200) & 2 (100) \\ 
    \hline
     SBM-1 Allocation &  4 & 8 & 18\\
     SBM-2 Allocation &  20 & 7 & 3\\
     SBM-3 Allocation &  11 & 10 & 9
\end{tabular}
\caption{Seed allocations for each SBM under the proposed algorithm}
\label{tab:fig2alloc}
\end{center}
\end{table}

Next, we consider the effect of choice of $\lambda$ on outcomes.
Figure~\ref{varyinglambda} shows entropy and coverage under SBM-1 (where the allocation strategy has the most effect on the outcome) while varying $\lambda$ from 0.1 to 5.   At smaller values of $\lambda$, the algorithm allocates all seeds to the largest community, but as $\lambda$ increases, the allocation converges to the one proposed by our algorithm in Figure~\ref{nonopt}.
Correspondingly, we see the entropy increase and coverage decrease;  both stabilize for $\lambda > 1$.  
SBM-2 and SBM-3 show a similar, though less dramatic, behavior.
This suggests that fair information spread can be guaranteed by $\lambda > 1$, with the exact value having little effect, and a value of $\lambda$ between 0 and 1 can be chosen to achieve a  compromise between coverage and fairness.

\begin{figure}[!ht]
	\centering
	\includegraphics[width=0.65\textwidth]{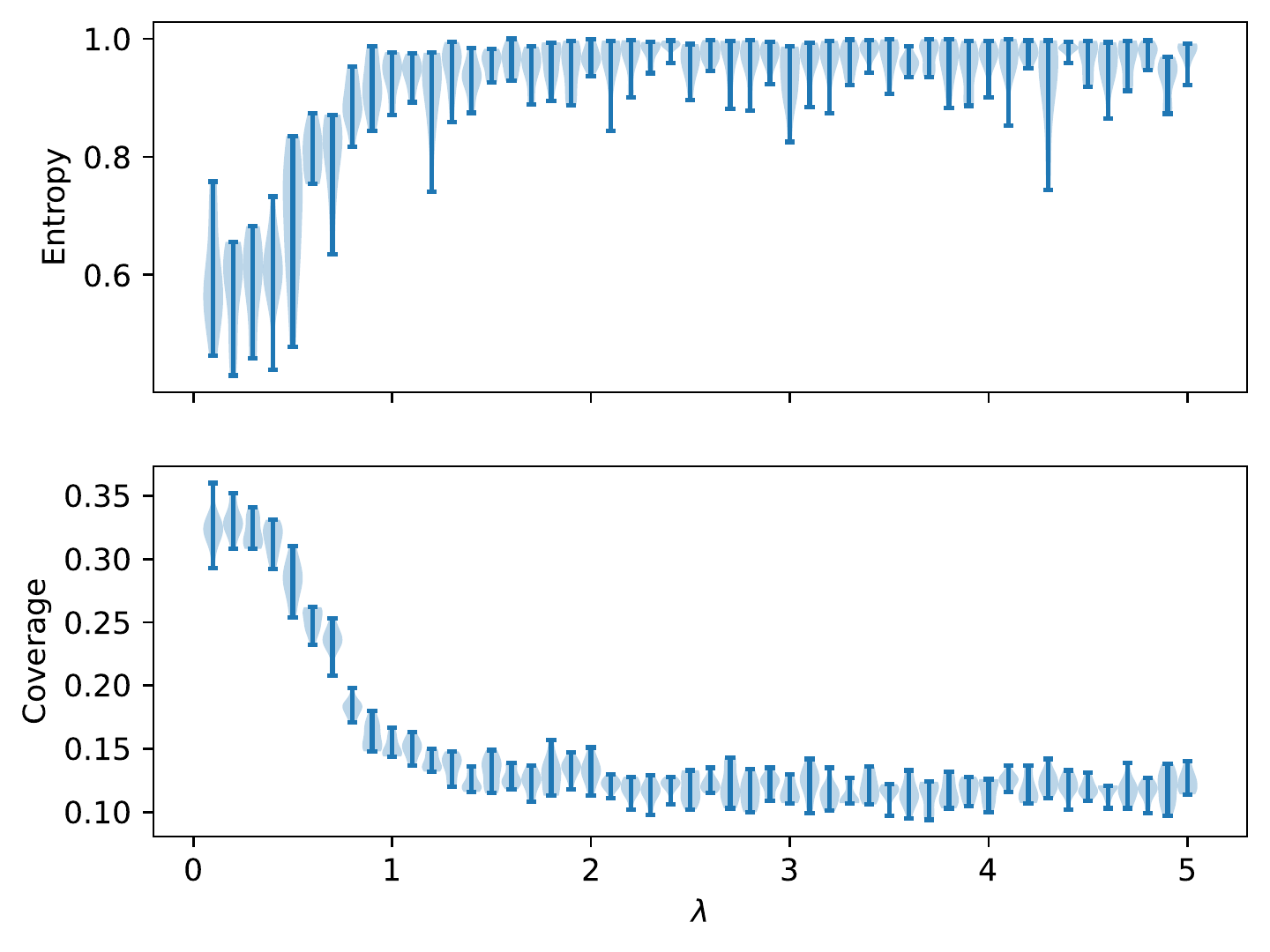}
	\caption{Results for SBM-1 as a function of $\lambda$.  Top row: entropy;  bottom row: coverage.  
	}\label{varyinglambda}
\end{figure}

Finally, we investigate how our approximation behaves over time.  For this simulation, we also take DCSBM (rather than SBM) networks, with $K=3$ communities, community probability vector $\pi^T = \brac{ 0.5, 0.3, 0.2 }$, $n = 1000$ nodes and the probability matrix  
$$P = \frac{1}{100} \begin{bmatrix} 1 & 0.04 & 0.01 \\ 0.04 & 1.2 & 0.05 \\ 0.01 & 0.05 & 1.3 \end{bmatrix}. $$
Degree parameters $\theta_i$ are chosen from a Poisson distribution with mean 5.   We then add 1 to each $\theta_i$ to avoid the possibility of 0 and renormalize to satisfy the identifiability constraint.  The average expected degree under this model is 4.3.
We again set the transmission probability to  $\beta_{ij}=0.2$ for all $i,j$,  choose $\lambda = 3$, and allocate the budget of $M=30$ seeds.

Figure~\ref{dcbmSim} shows the results for three values of time steps, $t = 0$, 3, and 5.  
As the number of time steps increases, our approximations (shown by a red asterisk) get  worse, as we would expect.  However, we see that the allocation itself and the resulting coverage and entropy  change a lot less than our approximations predict, giving us reassurance that the allocation strategy is robust.

\begin{figure}[!ht]
	\centering
	\includegraphics[width=0.65\textwidth]{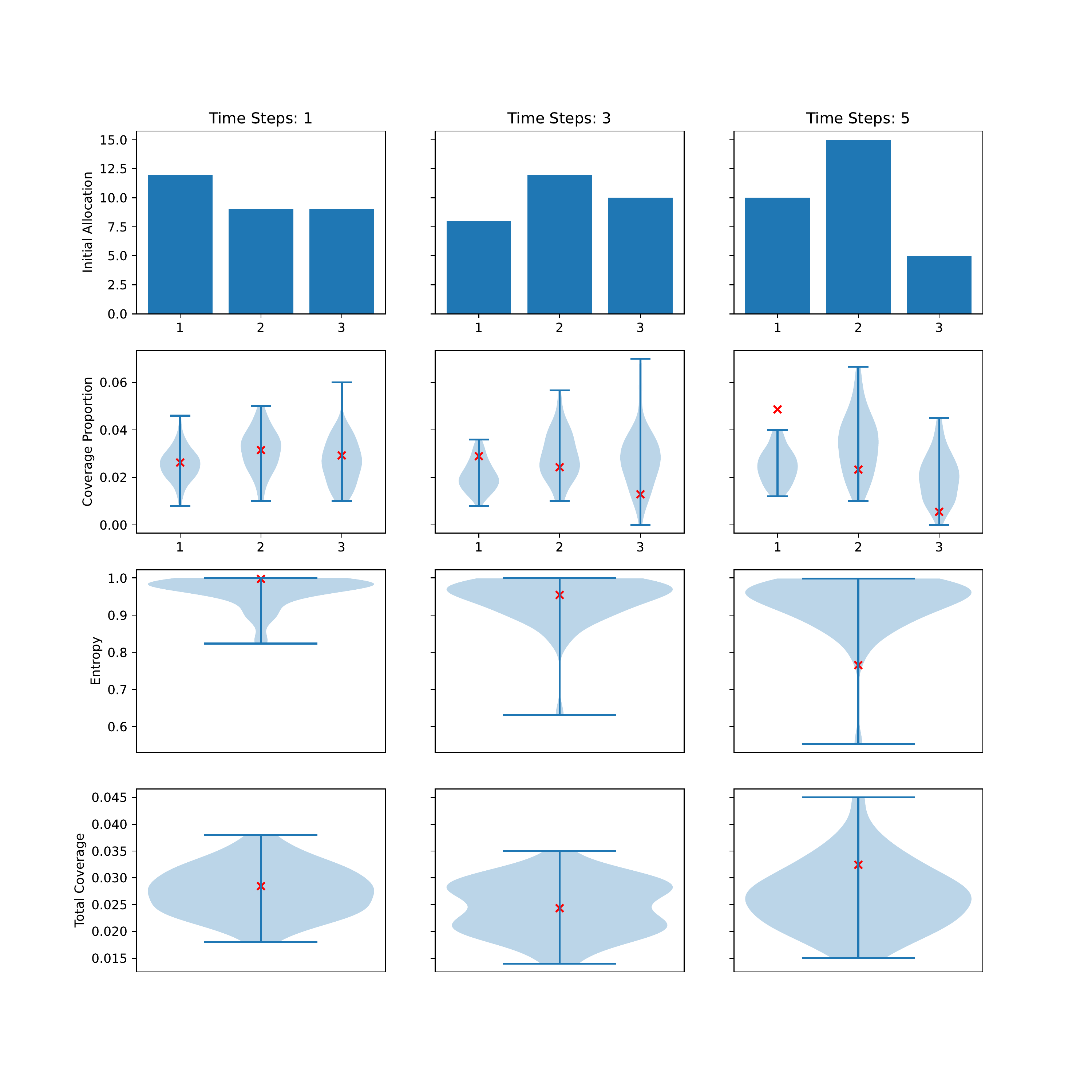}
	\caption{Simulation results over time.  Columns correspond to time steps $t=1, 3, 5$.  Row 1: Proposed seed allocation.   Row 2: Violin plots of coverage proportion for each community.   Row 3: Violin plots of entropy.  Row 4: violin plots of coverage. The red ``X" shows the values predicted based on the approximation. 
	}\label{dcbmSim}
\end{figure}

\section{Results on Real Networks}

In this section, we simulate information spread on real networks.
Data on actual information spread with different allocations is, unfortunately, not available to us, but using real social networks helps create a more realistic assessment of the impact of community structure on information spread.

The political blogs network collected blog websites related to US politics around the 2004 US election \citep{adamic2005political}, and treated webpages as nodes and hyperlinks between them as edges.  We ignore the direction of edges for simplicity. 
Each blog was manually classified as either liberal or conservative, creating two ground-truth communities.
This network exhibits high degree heterogeneity, making it more suitable for DCSBM than SBM as a model for communities.  

We focus on the largest connected component of the network, consisting of 1222 nodes and 16714 edges.
Recognizing the potential noise in the ground-truth communities and the fact that they are often unknown in applications, we estimated community memberships by the SCORE method, which was designed for the DCSBM \citep{jin2015fast}.  
Both ground-truth and estimated communities are of similar size and disagree on less than 5\% of nodes.  
The estimated affinity matrix is $\hat P = \frac{1}{100} \begin{bmatrix} 3.9 & 0.3 \\ 0.3 & 4.5  \end{bmatrix}$.

Because transmission probabilities are also unknown, we varied the probability of transmission within and between communities, each from 0.1 to 0.9 in increments of 0.1.
Though having a between community probability of transmission greater than within community seems unlikely, we still included results from this scenario for completeness.
For all simulations, we used a single time step, $\lambda=3$, and 34 seeds total; we choose $\floor{\sqrt n}$ for the number of seeds.
Figure~\ref{polblog} shows the mean entropy and coverage from 50 simulations of each between/within community probability of transmission scenario.
As one can see, mean entropy remains high, above 0.9, for all settings, and coverage tends to increase with the probability of transmission, particularly within community.


\begin{figure}[!ht]
	\centering
	\includegraphics[scale=0.8]{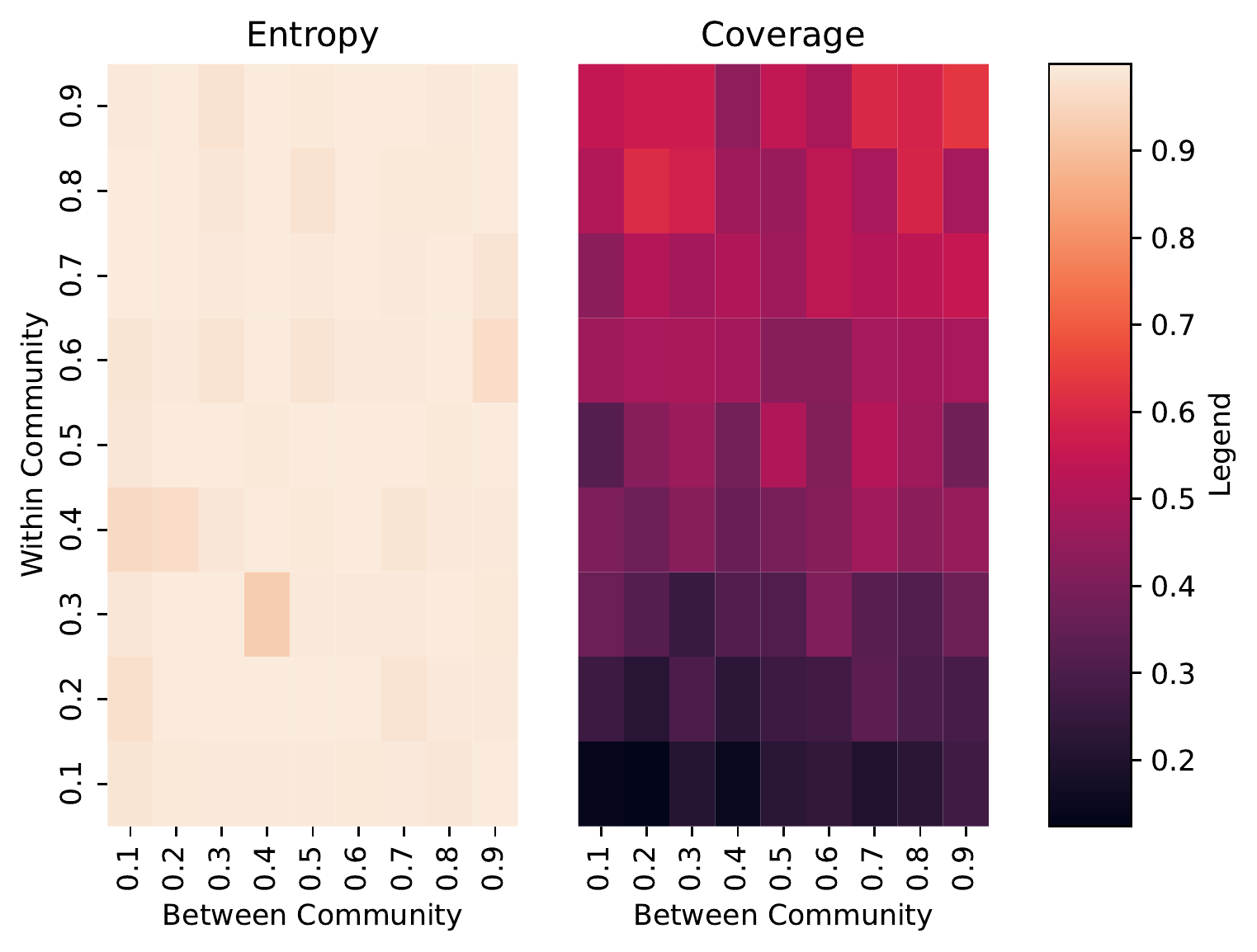}
	\caption{Entropy and coverage (averaged over 50 replications)  on the political blogs network using the proposed seed allocation, as a function of within- and between-community probabilities of transmission.   
	}\label{polblog}
\end{figure}

The second dataset is a Twitter network.   The nodes are members of the Mexican Chamber of Deputies from 2015 to 2018, and an edge indicates that one representative follows the other on twitter \citep{arroyo2022overlapping}.  Again, we ignore the direction of edges for simplicity.  
In the original data, there are nine ground-truth communities corresponding to political parties.
For our simulations, we only included parties with more 30 members, resulting in 350 nodes, 10834 edges, and four communities.
Instead of using parties, we again estimated communities using spectral clustering.   Table \ref{tab:mexican_communities} compares estimated labels with party memberships; they are largely aligned.  
In contrast to the political blogs network, this network has very unbalanced communities.

\begin{table}

\begin{center}
\begin{tabular}{ccccc}
	Estimated &MORENA & PAN & PRD & PRI \\
	Community 1 & 25 & 0 & 0 & 0 \\
	Community 2 & 9 & 0 & 41 & 0 \\
	Community 3 & 0 & 83 & 0 & 0 \\
	Community 4 & 3 & 1 & 3 & 185 \\
\end{tabular}
\end{center}
\caption{Mexican deputies Twitter network:  estimated vs ground truth communities.  }
\label{tab:mexican_communities}
\end{table}

Figure~\ref{mexpol} shows the results of simulated information spread on the Mexican Chamber of Deputies Twitter network, with a budget of 18 seeds.
Again, we varied the probability of transmission within and between communities and used a single time step, and $\lambda=3$ for all simulations.
The results again show high entropy in all cases and coverage increasing with probability of transmission, particularly within communities.  

\begin{figure}[!ht]
	\centering
	\includegraphics[width=0.9\textwidth]{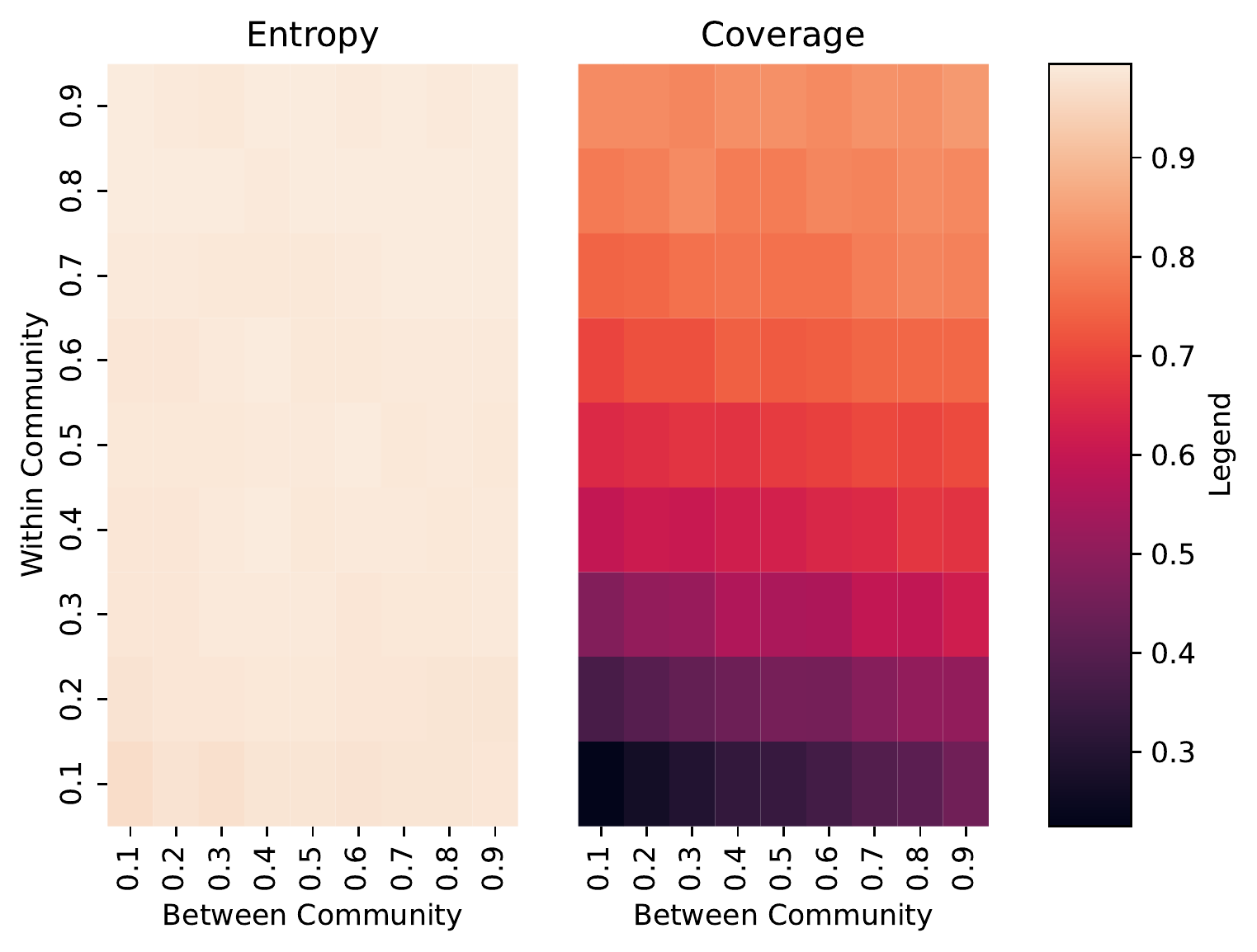}
	\caption{Entropy and coverage (averaged over 50 replications)  on the Mexican deputies Twitter network using the proposed seed allocation, as a function of within- and between-community probabilities of transmission. 	
	}\label{mexpol}
\end{figure}

\section{Discussion}

Finding ways to bring fairness into IM is important because in settings such as public health, disparities in access to information have the potential to exacerbate inequities.  Our main contribution is emphasizing fairness over empirically observed communities rather than some pre-determined demographic covariate.  Fairness across such covariates is likely to be improved by focusing on network communities, since they are often correlated, and allows us to focus on the empirical structure in the network that directly affects information spread.  At the same time, we allow for uncertainties in the observed communities and in the edges themselves by estimating the underlying population communities.

A clear limitation of our approach is the error of the approximation accumulating over time.  A brute force approach to optimization would avoid this problem but greatly increase computational cost, especially for large networks.   Another potential limitation is the need to estimate the number of communities in practice, which is difficult to do when there are many;  however, for the purposes of fairness a coarse partition of the network into a smaller number of communities may well be sufficient.  Finally, exploring more complex network models than the block models may make the algorithm relevant to a broader range of settings;  we leave this direction for future work.  

\section*{Acknowledgements}
O. Mesner was supported by an NSF RTG grant 1646108.
E. Levina's research was partly supported by NSF grants 1916222, 2052918, and  2210439.
J. Zhu's research was supported by NSF DMS 1821243 and NSF DMS 2210439.

\bibliographystyle{apalike}
\bibliography{nf.bib}{}

\end{document}